\documentclass{article}

\usepackage{microtype}
\usepackage{graphicx}
\usepackage{bbding}
\usepackage{multirow}
\usepackage{subcaption}
\usepackage{booktabs} 
\usepackage{float}

\usepackage{hyperref}

\usepackage[preprint]{icml2026}

\usepackage{amsmath}
\usepackage{amssymb}
\usepackage{mathtools}
\usepackage{amsthm}

\usepackage[capitalize,noabbrev]{cleveref}

\theoremstyle{plain}
\newtheorem{theorem}{Theorem}[section]

\newtheorem{lemma}[theorem]{Lemma}

\theoremstyle{definition}

\theoremstyle{remark}

\usepackage[textsize=tiny]{todonotes}

\icmltitlerunning{Refine and Purify: Orthogonal Basis Optimization with Null-Space Denoising for Conditional Representation Learning}

\begin{document}

\twocolumn[
  \icmltitle{Refine and Purify: Orthogonal Basis Optimization with Null-Space Denoising for Conditional Representation Learning}



  \icmlsetsymbol{equal}{*}

  \begin{icmlauthorlist}
    \icmlauthor{Jiaquan Wang}{seu}
    \icmlauthor{Yan Lyu}{seu}
    \icmlauthor{Chen Li}{monash}
    \icmlauthor{Yuheng Jia}{seu}


  \end{icmlauthorlist}

  \icmlaffiliation{seu}{School of Computer Science and Engineering, Southeast University, China}
  \icmlaffiliation{monash}{Biomedicine Discovery Institute, Department of Biochemistry and Molecular Biology, Monash University, Australia}
  

  \icmlcorrespondingauthor{Yuheng Jia}{yhjia@seu.edu.cn}

  \icmlkeywords{Machine Learning, ICML}

  \vskip 0.3in
]



\printAffiliationsAndNotice{}  

\begin{abstract}
Conditional representation learning aims to extract criterion-specific features for customized tasks. Recent studies project universal features onto the conditional feature subspace spanned by an LLM-generated text basis to obtain conditional representations. However, such methods face two key limitations: sensitivity to subspace basis and vulnerability to inter-subspace interference. To address these challenges, we propose OD-CRL, a novel framework integrating Adaptive Orthogonal Basis Optimization (AOBO) and Null-Space Denoising Projection (NSDP). Specifically, AOBO constructs orthogonal semantic bases via singular value decomposition with a curvature-based truncation. NSDP suppresses non-target semantic interference by projecting embeddings onto the null space of irrelevant subspaces. Extensive experiments conducted across customized clustering, customized classification, and customized retrieval tasks demonstrate that OD-CRL achieves a new state-of-the-art performance with superior generalization. 

\end{abstract}

\section{Introduction}
Representation learning is a fundamental paradigm in machine learning that extracts meaningful abstractions from raw data. Enabled by self-supervised techniques such as contrastive learning \cite{chen2020simple, he2020momentum, caron2020unsupervised} and masked prediction \cite{devlin2019bert, he2022masked, wei2022masked}, representation learning has been successfully applied to multiple downstream tasks \cite{mo2022simple, gordo2016deep, sarafianos2019adversarial, yan2023implicit}. However, a critical challenge persists: existing feature learning methods predominantly capture dominant features (e.g., shape and category) while overlooking other secondary features (e.g., color and quantity). This limitation constrains the applicability of representation learning to customized tasks with specific criteria. For instance, when clustering images by color similarity, conventional methods may inadvertently group objects based on their shapes, failing to meet the user-specified criterion.

Consequently, conditional representation learning has emerged as a promising paradigm that extracts criterion-specific representations tailored to particular requirements. Supervised fine-tuning methods \cite{geng2025personalized, liu2024interactive} obtain conditional representations by retraining models on criterion-aligned labeled data. Prompt-driven methods \cite{kwon2023image, wang2025esmc} typically rely on meticulously crafted prompts to elicit conditional representations from LLMs. However, supervised methods require expensive data annotation, while prompt-driven approaches incur heavy inference burdens from LLMs.

\begin{figure}[t]
  \begin{center}
    \centerline{\includegraphics[width=\columnwidth]{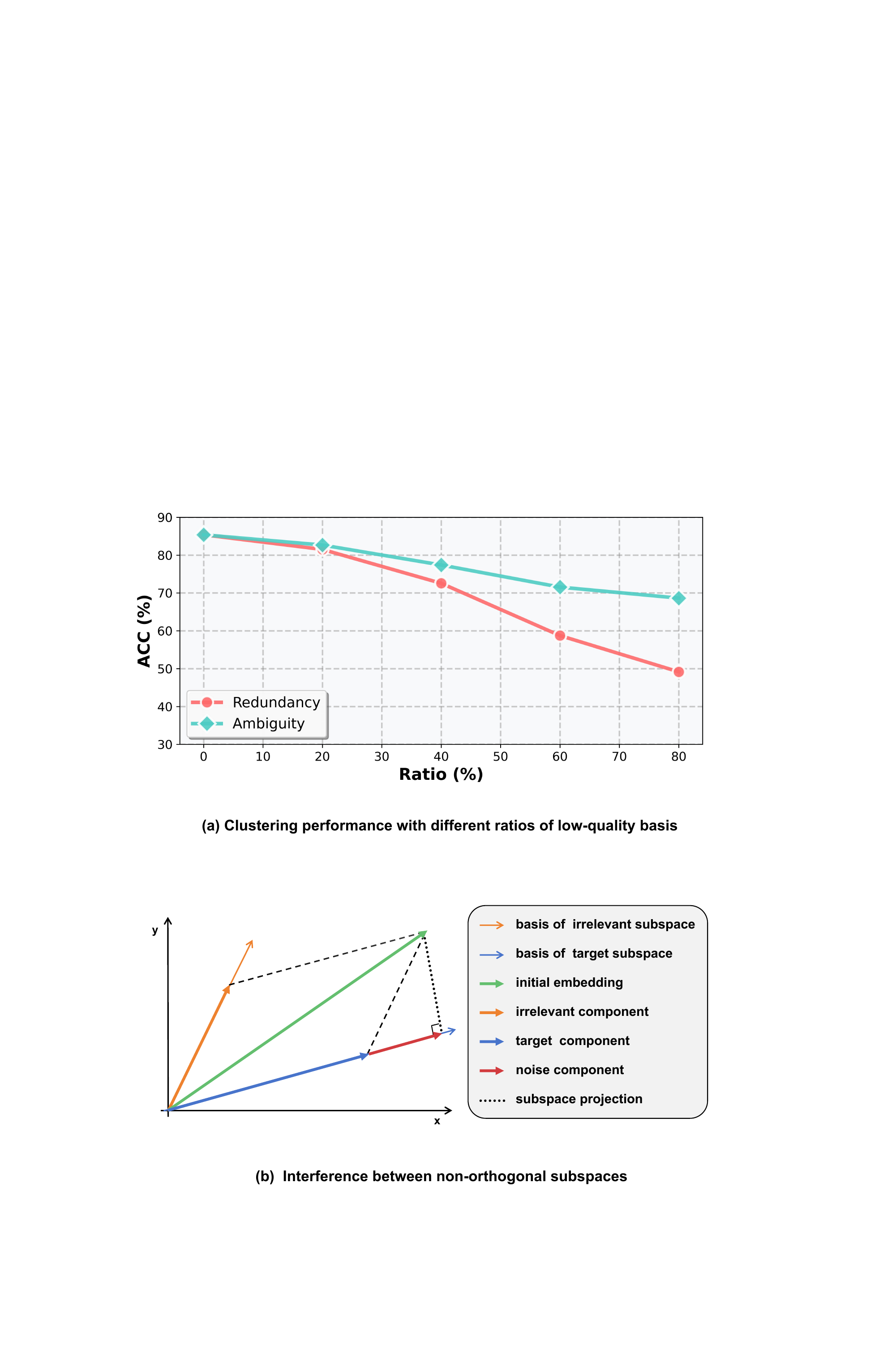}}
    \caption{Sensitivity to subspace basis and vulnerability to inter-subspace interference constitute critical limitations in current subspace projection-based methods. \textbf{(a)} As the ratio of low-quality bases containing redundancy or ambiguity increases, clustering performance declines. \textbf{(b)} Noise components induced by subspace non-orthogonality lead to projection deviation.}
    \label{fig2}
  \end{center}
\end{figure}

Recently, subspace projection-based methods \cite{yao2024multi, yao2024customized, liu2025conditional} have been employed to achieve efficient conditional representation learning. Specifically, criterion-specific text bases are firstly constructed by encoding LLM-generated descriptions through a VLM's text encoder, while images are simultaneously transformed into general representations via the image encoder. The resulting image embeddings are then projected onto the subspaces spanned by criterion-specific text bases to obtain conditional representations.


However, such methods face two critical limitations that hinder their effectiveness.
First, they are highly sensitive to the quality of LLM-generated text bases. Specifically, the presence of highly correlated synonyms (e.g., ``red", ``crimson") and polysemous words (e.g., ``orange", ``rose") in LLM-generated texts introduces semantic redundancy and ambiguity into the resulting bases for projection. As shown in \cref{fig2}(a), such semantic redundancy and ambiguity in bases substantially degrade the discriminability of learned conditional representations. Therefore, a general optimization strategy for text bases is essential to enhance projection performance.
Second, existing methods assume mutual orthogonality between criterion-specific subspaces during projection, neglecting the inter-subspace interference. However, this assumption is frequently invalidated in practice, as subspaces corresponding to different criteria inevitably exhibit overlap in feature space. \cref{fig2}(b) illustrates the projection deviation induced by this non-orthogonality of subspaces, showing the necessity of eliminating interference from non-target semantic components.

In this paper, we propose a novel method called OD-CRL, which comprises two key components: Adaptive Orthogonal Basis Optimization (AOBO) and Null-Space Denoising Projection (NSDP). Specifically, AOBO automatically constructs an orthogonal semantic basis from LLM-generated text via singular value decomposition (SVD). To balance information preservation and noise suppression, AOBO adopts a curvature-based truncation to determine the optimal number of basis vectors. NSDP is designed to explicitly suppress interference from non-target semantic components. By projecting image embeddings onto the null space of non-target subspaces prior to conditional representation extraction, NSDP ensures that the learned representations remain pure and criterion-specific. Extensive experiments across diverse tasks, including customized clustering, customized classification, and customized retrieval, demonstrate that OD-CRL achieves superior performance with strong generalization.

In summary, our contributions are as follows:
\begin{itemize}
\item We propose Adaptive Orthogonal Basis Optimization, which constructs orthogonal semantic bases from LLM-generated text through SVD with a curvature-based adaptive truncation, effectively enhancing the discriminability of conditional representations.
\item We introduce Null-Space Denoising Projection to suppress non-target semantic interference. By projecting embeddings onto the null space of irrelevant subspaces, NSDP ensures the purity and specificity of the learned conditional representations.
\item Extensive experiments show that OD-CRL achieves state-of-the-art performance on diverse downstream tasks, including customized clustering, customized classification, and customized retrieval, advancing efficient conditional representation learning.
\end{itemize}

\section{Related Work}
\subsection{General Representation Learning}
General representation learning aims to extract universal features from raw data. As a classical unsupervised learning method, autoencoders \cite{hinton2006reducing} obtain compact representations by minimizing reconstruction error. Denoising autoencoders \cite{vincent2008extracting} and variational autoencoders \cite{kingma2013auto} introduce corruption robustness and probabilistic modeling respectively, improving latent space stability and structure. In recent years, the self-supervised learning paradigm has significantly expanded the boundaries of representation learning. Pretext tasks such as patch and rotation prediction \cite{doersch2015unsupervised, gidaris2018unsupervised}, solving jigsaw puzzles \cite{noroozi2016unsupervised}, and colorization \cite{zhang2016colorful} enable models to capture global-local consistency. Contrastive learning frameworks such as SimCLR \cite{chen2020simple}, MoCo \cite{he2020momentum}, and SwAV\cite{caron2020unsupervised} employ instance discrimination to learn highly discriminative features without manual annotation. CLIP \cite{radford2021learning} aligns visual and language spaces through cross-modal contrastive learning, while BLIP-2 \cite{li2023blip} bridges frozen unimodal models via contrastive learning with a lightweight Querying Transformer. Both have become mainstream feature extractors for downstream tasks. Despite significant advances, most existing representation learning methods focus solely on dominant features. The neglect of diverse conditional features limits their applicability to customized tasks.

\begin{figure*}[ht]
  \begin{center}
    \centerline{\includegraphics[width=\textwidth]{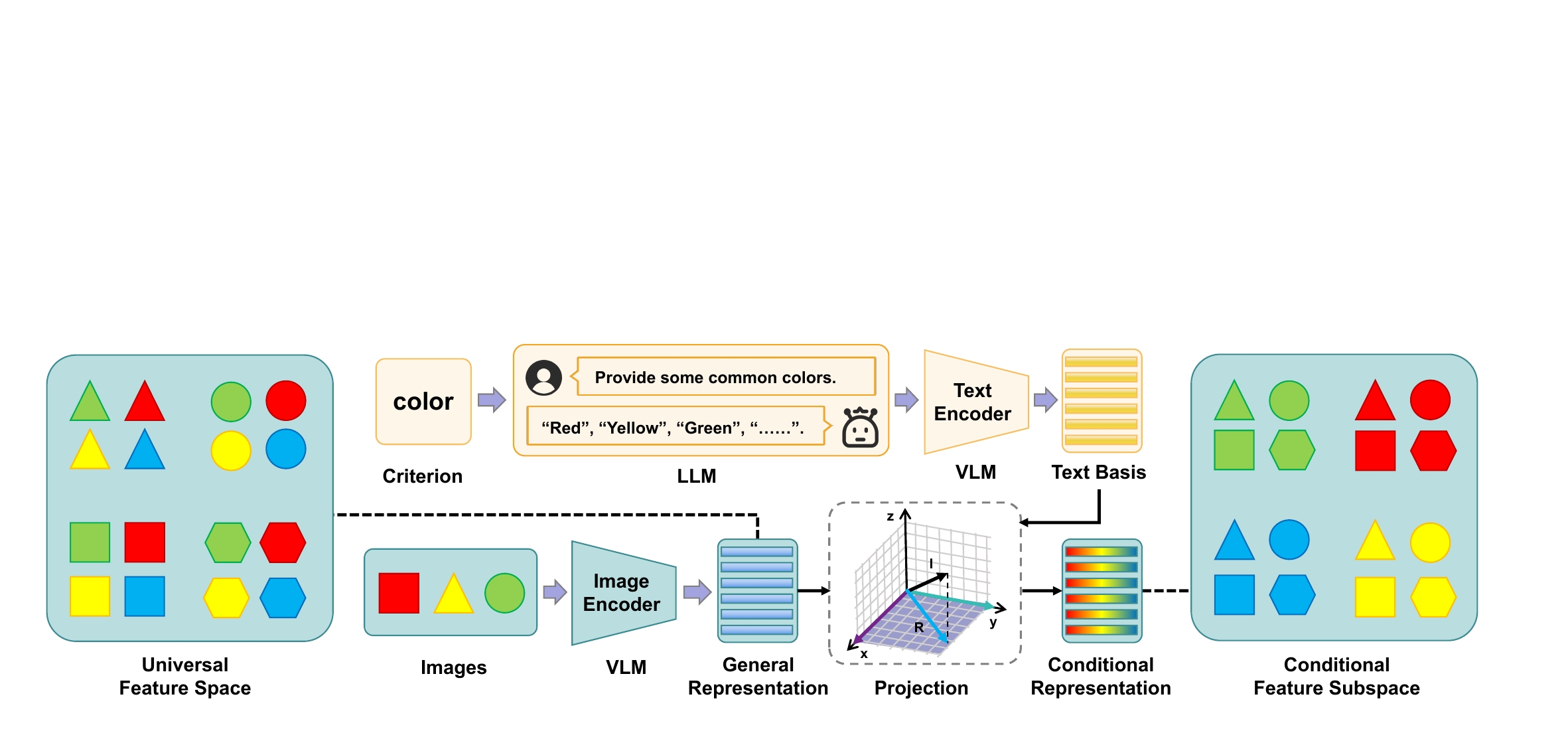}}
    \caption{The basic pipeline of subspace projection-based conditional representation learning.}
    \label{fig3}
  \end{center}
\end{figure*}

\subsection{Conditional Representation Learning}
Conditional representation learning aims to extract criterion-specific features from raw data. Supervised fine-tuning \cite{geng2025personalized, liu2024interactive} provides a straightforward solution by retraining pre-trained models on criterion-specific labeled datasets. However, this approach is fundamentally constrained by data scarcity and prohibitive labeling costs. To mitigate reliance on labeled data, recent methods leverage expert models to elicit conditional representations through carefully designed prompts. IC$\mid$TC \cite{kwon2023image} obtains conditional labels for images via direct dialogue with VLMs, while ESMC \cite{wang2025esmc} captures intermediate hidden states from VLMs that exhibit the strongest response to target criteria as conditional representations. Nevertheless, the substantial computational overhead and sensitivity to prompt design remain significant bottlenecks. Subspace projection-based methods construct conditional representations by projecting image embeddings onto subspaces spanned by criterion-specific text bases. Multi-Map \cite{yao2024multi} and Multi-Sub \cite{yao2024customized} learn a set of trainable text basis weights for each image, which are optimized jointly through cross-modal alignment objectives. Upon convergence, these learned weights encode the image information within the specified semantic subspace. CRL \cite{liu2025conditional} directly employs cosine similarity between image features and text basis vectors as conditional representations. However, the performance of projection is severely limited by the quality of LLM-generated text bases and interference from irrelevant semantic components.

\section{Method}
\subsection{Formulation and Framework}
Given a set of images $\mathbf{X} = \{\mathbf{x}_1, \mathbf{x}_2, \ldots, \mathbf{x}_m\}$ and a user-specified criterion $C$ (e.g., ``color'' or ``texture''), conditional representation learning aims to extract image representations that are aligned with $C$.

 As shown in \cref{fig3}, conditional representation learning is formulated as a subspace projection process on feature representations. The key insight is to construct a target feature subspace corresponding to $C$ and project the image features onto this subspace to obtain criterion-specific representations. 

To construct the basis under the specific criterion $C$, we query an LLM to generate the related descriptive text $W$:
\begin{equation}
W = \text{LLM}(P_1, C)
\end{equation}
where $P_1$  denotes the LLM prompt template.

The generated text $W$ is then encoded by a Vision-Language Model (VLM) to obtain a set of linearly independent basis vectors $\mathbf{T}$ that span the feature subspace corresponding to criterion $C$:
\begin{equation}
\mathbf{T} = \text{VLM}_{\text{text}}(P_2,W)
\end{equation}
where $P_2$ denotes the VLM prompt that is concatenated with $W$ to construct a complete sentence. More details about the text basis generation can be found in Appendix \ref{appendix:datasets}.

In parallel, images $\mathbf{X}$ are encoded through the same VLM to extract their universal features $\mathbf{I}$:
\begin{equation}
\mathbf{I} = \text{VLM}_{\text{image}}(\mathbf{X})
\end{equation}
Finally, the conditional representations $\mathbf{R}$ are obtained by projecting the universal features $\mathbf{I}$ onto the conditional feature subspace spanned by $\mathbf{T}$:
\begin{equation}
\mathbf{R} = \text{proj}(\mathbf{I}, \mathbf{T})
\end{equation}
Through this projection, $\mathbf{R}$ capture only the information relevant to the specified criterion $\mathcal{C}$, enabling criterion-specific image analysis.

\subsection{Adaptive Orthogonal Basis Optimization}
While current methods directly utilize LLM-generated text to construct the projection basis for conditional representation learning, such descriptive texts often exhibit semantic redundancy and ambiguity. Specifically, when an LLM is queried about color-related expressions, the generated text may include highly correlated terms such as ``red", ``crimson" and ``scarlet", which map to nearly identical semantic directions in the VLM's feature space. Moreover, polysemous words such as ``orange" and ``rose" may inadvertently introduce unintended semantic information beyond the target criterion. Such redundancy and ambiguity are encoded in the projection basis, which compromises the discriminability of the learned representations. To address this limitation, we propose Adaptive Orthogonal Basis Optimization, which automatically constructs an orthogonal basis from LLM-generated text.

\textbf{SVD-based Basis Optimization.} Given the original text basis $\mathbf{T} \in \mathbb{R}^{n \times d}$, where $n$ denotes the number of descriptive texts and $d$ represents the embedding dimension, we perform singular value decomposition:
\begin{equation}
\mathbf{T} = \mathbf{U}\mathbf{\Sigma}\mathbf{V}^\top
\end{equation}
where $\mathbf{U} \in \mathbb{R}^{n \times n}$ and $\mathbf{V} \in \mathbb{R}^{d \times d}$ are orthogonal matrices, and $\mathbf{\Sigma} \in \mathbb{R}^{n \times d}$ is a diagonal matrix containing the singular values in descending order.

The column vectors of $\mathbf{V}$ form an orthonormal basis that spans the criterion-specific feature subspace induced by the original text embeddings. The orthogonality ensures that each basis vector represents a decorrelated semantic component, thereby eliminating redundancy among the text embeddings. 

\textbf{Curvature-based Adaptive Truncation.} However, not all basis vectors in $\mathbf{V}$ encode meaningful semantics. Basis vectors corresponding to larger singular values capture discriminative information, whereas those with smaller singular values primarily represent noise introduced by redundancy and ambiguity. To balance semantic information preservation against noise removal, we adopt a curvature-based truncation to automatically determine the optimal number $k^*$ of basis vectors to retain.

Specifically, we compute the cumulative energy ratio:
\begin{equation}
E(k) = \frac{\sum_{i=1}^{k} \sigma_i^2}{\sum_{i=1}^{n} \sigma_i^2}
\end{equation}
where $\sigma_i$ denotes the $i$-th singular value. The cumulative energy curve $E(k)$ typically exhibits an ``elbow'' shape: it rises steeply as principal components capture most variance, then flattens as subsequent components contribute diminishing information.

To quantify this ``elbow" behavior, we construct a normalized discrete curve $(x_i, y_i)$ for $i = 1, 2, \ldots, n$, where $x_i = (i-1)/(n-1) \in [0, 1]$ represents the normalized proportion of retained basis vectors, and $y_i$ is the min-max normalized cumulative energy derived from $E(k)$:
\begin{equation}
y_i = \frac{E(i) - E(1)}{E(n) - E(1)} \in [0, 1]
\end{equation}
The curvature at each point is then computed as:
\begin{equation}
\kappa(x_i) = \frac{|y''(x_i)|}{(1 + (y'(x_i))^2)^{3/2}}
\end{equation}
where $y'(x_i)$ and $y''(x_i)$ are estimated via finite differences. The optimal $k^*$ corresponds to the point of maximum curvature:
\begin{equation}
k^* = \mathop{\arg\max}\limits_{i} \kappa(x_i)
\end{equation}

The maximum curvature point identifies the location where the marginal energy gain per additional component decreases most rapidly, marking the transition from signal-dominated to noise-dominated directions. This adaptive truncation ensures retention of sufficient semantic information while effectively filtering out noise-contaminated dimensions.

Finally, we construct the optimized orthogonal basis $\mathbf{T}^* \in \mathbb{R}^{k^* \times d}$ by selecting the top $k^*$ right singular vectors:
\begin{equation}
\mathbf{T}^* = \mathbf{V}_{:,1:k^*}^{\top}
\end{equation}
which serves as the semantic basis for subsequent projection.

The optimized basis possesses several key advantages: (1) \textbf{Orthogonality}: The basis vectors significantly eliminate redundancy ($\text{rank}(\mathbf{T}^*) \ll \text{rank}(\mathbf{T})$) while reducing projection to inner product computation; (2) \textbf{Optimality}: For any given $k^*$, the retained basis vectors explain the maximum possible variance in the target subspace, yielding highly discriminative projected representations; (3) \textbf{Robustness}: The basis vectors effectively separate signal from noise introduced by redundancy and ambiguity, filtering out contaminated dimensions while preserving discriminative components.

\subsection{Null-Space Denoising Projection}
In practice, feature subspaces corresponding to different criteria are not orthogonal. Consequently, criterion-specific projections remain susceptible to interference from irrelevant semantic components in other non-orthogonal subspaces. To further purify the conditional representations, we propose Null-space Denoising Projection that explicitly suppresses such interference.

\textbf{Semantic Subspace Decomposition.} 
We formally define two subspaces within the feature space. The target subspace, spanned by $\mathbf{T}^*_t$, captures features aligned with the user-specified criterion. The noise subspace encompasses all non-target semantic components, defined as the union of subspaces corresponding to non-target criteria. Under this framework, the image embeddings can be decomposed as:
\begin{equation}
\mathbf{I} = \mathbf{R}_{t} \mathbf{T}^*_{t} + \mathbf{R}_{n} \mathbf{T}^*_{n} + \boldsymbol{\epsilon}
\end{equation}
where $\mathbf{I} \in \mathbb{R}^{m \times d}$ denotes the init image embeddings from the VLM, $\mathbf{T}^*_{t} \in \mathbb{R}^{k^* \times d}$ is the target subspace basis, $\mathbf{R}_{t} \in \mathbb{R}^{m \times k^*}$ denotes the target-aligned representations, $\mathbf{T}^*_{n} \in \mathbb{R}^{p \times d}$ is the noise subspace basis, $\mathbf{R}_{n} \in \mathbb{R}^{m \times p}$ denotes the non-target representations, and $\boldsymbol{\epsilon} \in \mathbb{R}^{m \times d}$ captures the residual.

\textbf{Null-Space Projection.}
To eliminate feature components aligned with non-target semantics, we project the image embeddings onto the null space (orthogonal complement) of the noise subspace. We firstly perform SVD to identify the null space of the noise subspace:
\begin{equation}
\mathbf{T}^*_n = \mathbf{U}_n \mathbf{\Sigma}_n \mathbf{V}_n^\top
\end{equation}
where $\mathbf{T}^*_n \in \mathbb{R}^{p \times d}$ represents the noise subspace basis, approximated by the optimized basis of the non-target dominant subspace (e.g., shape for color-based tasks). $\mathbf{V}_n \in \mathbb{R}^{d \times d}$ contains the right singular vectors, and $\mathbf{\Sigma}_n \in \mathbb{R}^{p \times d}$ contains the singular values in descending order. Let $r = \text{rank}(\mathbf{T}^*_n)$ denote the numerical rank. The null space of $\mathbf{T}^*_n$ is spanned by the trailing $(d - r)$ right singular vectors:
\begin{equation}
\mathbf{T}_{\text{null}} = \mathbf{V}_{:, r+1:d}^\top \in \mathbb{R}^{(d-r) \times d}
\end{equation}
where $\mathbf{T}_{\text{null}}$ consists of all vectors orthogonal to the noise subspace. Following the standard subspace projection formula, the denoised image representation is computed as:
\begin{equation}
\tilde{\mathbf{I}}^{ } = \mathbf{I}^{ } \mathbf{T}_{\text{null}}^\top (\mathbf{T}_{\text{null}}^{ } \mathbf{T}_{\text{null}}^\top)^{-1} \mathbf{T}_{\text{null}}^{ } = \mathbf{I}^{ } \mathbf{T}_{\text{null}}^\top \mathbf{T}_{\text{null}}^{ }
\end{equation}
The null-space projection eliminates components of $\mathbf{I}$ in the span of $\mathbf{T}^*_n$, ensuring that semantic components residing in the noise subspace do not contaminate the subsequent target subspace projection. 

\textbf{Conditional Representation Extraction.}
Finally, we extract the conditional representation by projecting the denoised image features $\mathbf{I}$ onto the target subspace spanned by $\mathbf{T}^*_t$:
\begin{equation}
\mathbf{R}_t = \tilde{\mathbf{I}} (\mathbf{T}^*_t)^\top \in \mathbb{R}^{m \times k^*}
\end{equation}
The obtained $\mathbf{R}_t$ are criterion-specific representations that encode target-relevant semantics while remaining free from interference by the noise subspace. 

Notably, due to the non-orthogonality of subspaces, Null-Space Denoising Projection inevitably removes a portion of target semantic components that lie in the subspace overlapping regions when eliminating irrelevant ones. Nevertheless, we demonstrate in Appendix \ref{appendix:null_space_proof} that the gain from noise reduction substantially outweighs this loss.

\section{Experiments}
To evaluate the performance of the proposed OD-CRL, we apply the method to three downstream tasks including customized clustering, customized few-shot classification, and customized fashion retrieval. Given the focus on conditional representation learning, these tasks employ diverse evaluation criteria and feature-rich datasets to ensure comprehensive assessment. We further conduct ablation studies, visualizations, and analysis on text basis to demonstrate the effectiveness and robustness of the proposed method.

\subsection{Customized Clustering}\label{sec1}
\textbf{Dataset.} For this task, we employ Clevr4-10k \cite{vaze2023no} and Cards \cite{yao2023augdmc} as benchmark datasets. Clevr4-10k is a synthetic dataset containing 10,531 samples with 4 distinct partition criteria based on ``shape", ``texture", ``color", and ``count", respectively. Cards is a poker card dataset comprising 8,029 samples, organized according to 2 criteria, namely ``number" and ``suit".

\textbf{Setup.} We utilize pre-trained CLIP to extract initial representations from text and images. Subsequently, we perform k-means clustering on the learned conditional representations, repeating the procedure 20 times with different random seeds, and report the average results. BLIP2-based experiments are also incorporated to demonstrate the generalizability of OD-CRL.

\textbf{Metric.} We employ three widely used clustering metrics for evaluation: Normalized Mutual Information (NMI), Accuracy (ACC), and Adjusted Rand Index (ARI).


\textbf{Performance.} As shown in \cref{tab1}, the conditional representations derived from OD-CRL exhibit remarkable clustering performance improvements over the general representations from original CLIP, especially for the color criterion, where the ACC is substantially boosted from 12.23\% to 89.88\%. OD-CRL significantly surpasses all other customized clustering methods across all criteria. When using CLIP as the backbone, OD-CRL achieves an average performance improvement of 26.08\% over CRL across all evaluation metrics. Moreover, without complex clustering modules, it achieves near-optimal performance even for the shape criterion, where traditional clustering methods \cite{li2021contrastive, van2020scan} typically perform best. Remarkably, OD-CRL accomplishes this without requiring any training, effectively balancing efficiency with superior performance.

\begin{table*}[ht]
  \caption{Performance on the task of customized clustering.}
  \label{tab1}
  \begin{center}
    \begin{small}
        \begin{tabular}{c|ccc|ccc|ccc|c}
          \toprule
          & \multicolumn{9}{c|}{Clevr4-10k} & \\
          \cmidrule{2-10}
          Method & \multicolumn{3}{c|}{Texture} & \multicolumn{3}{c|}{Shape} & \multicolumn{3}{c|}{Color} & Mean \\
          \cmidrule{2-10}
          & NMI & ACC & ARI & NMI & ACC & ARI & NMI & ACC & ARI & \\ 
          \midrule
          CC & 0.16 & 11.34 & 0.00 & \textbf{94.66} & \textbf{96.89} & \textbf{93.90} & 16.54 & 11.42 & 0.07 & 36.11 \\
          SCAN & 0.41 & 11.97 & 0.86 & \underline{90.99} & 89.10 & 84.03 & 0.20 & 11.51 & 0.01 & 32.12 \\
          Multi-Map & 3.77 & 17.25 & 1.81 & 67.48 & 66.01 & 57.40 & 56.83 & 56.46 & 45.73 & 41.42 \\
          Multi-Sub & 4.93 & 18.30 & 3.52 & 72.48 & 72.07 & 61.48 & 64.34 & 62.19 & 59.62 & 46.55 \\
          CLIP & 1.11 & 13.09 & 0.41 & 74.22 & 73.19 & 64.15 & 0.83 & 12.23 & 0.27 & 26.61 \\
          CLIP+CRL & \underline{10.91} & \underline{25.22} & \underline{6.55} & 78.40 & 81.83 & 71.35 & \underline{88.09} & \underline{87.26} & \underline{81.07} & \underline{58.96} \\
          \textbf{CLIP+OD-CRL} & \textbf{14.11} & \textbf{28.42} & \textbf{8.81} & 86.67 & \underline{90.69} & \underline{84.11} & \textbf{89.99} & \textbf{89.88} & \textbf{85.09} & \textbf{64.19} \\
          \midrule
          BLIP2 & 0.79 & 12.32 & 0.28 & 86.98 & 85.68 & 81.17 & 0.99 & 11.92 & 0.24 & 31.15 \\
          BLIP2+CRL & \underline{6.29} & \underline{18.85} & \underline{3.76} & \underline{90.01} & \underline{87.97} & \underline{83.35} & \underline{83.44} & \underline{81.78} & \underline{74.37} & \underline{58.87} \\
          \textbf{BLIP2+OD-CRL} & \textbf{7.98} & \textbf{20.80} & \textbf{5.02} & \textbf{93.20} & \textbf{93.01} & \textbf{90.26} & \textbf{88.86} & \textbf{90.39} & \textbf{83.68} & \textbf{63.69} \\
          \midrule
          \midrule
          & \multicolumn{3}{c|}{Clevr4-10k} & \multicolumn{6}{c|}{Cards} & \\
          \cmidrule{2-10}
          Method & \multicolumn{3}{c|}{Count} & \multicolumn{3}{c|}{Number} & \multicolumn{3}{c|}{Suits} & Mean \\
          \cmidrule{2-10}
          & NMI & ACC & ARI & NMI & ACC & ARI & NMI & ACC & ARI & \\
          \midrule
          CC & 2.08 & 14.67 & 1.09 & \underline{24.91} & 26.34 & 12.30 & 24.94 & 39.21 & 16.87 & 18.05 \\
          SCAN & 3.42 & 14.29 & 1.23 & 11.11 & 18.21 & \underline{17.60} & 15.01 & 32.02 & 9.48 & 13.60 \\
          Multi-Map & 11.38 & 20.13 & 7.67 & 16.32 & 20.61 & 7.95 & 14.02 & 46.65 & 11.08 & 17.31 \\
          Multi-Sub & 13.77 & 21.64 & 9.28 & 19.58 & 23.27 & 10.28 & 18.76 & 49.09 & 15.24 & 20.10 \\
          CLIP & 9.50 & 19.02 & 5.70 & 16.84 & 18.91 & 8.44 & 16.52 & 43.74 & 12.93 & 16.84 \\
          CLIP+CRL & \underline{25.51} & \underline{26.29} & \underline{12.52} & 24.68 & \underline{28.59} & 12.27 & \underline{37.70} & \underline{62.08} & \underline{34.36} & \underline{29.33} \\
          \textbf{CLIP+OD-CRL} & \textbf{28.47} & \textbf{29.76} & \textbf{15.47} & \textbf{38.78} & \textbf{44.45} & \textbf{28.40} & \textbf{41.61} & \textbf{73.22} & \textbf{42.20} & \textbf{38.04} \\
          \midrule
          BLIP2 & 6.11 & 16.36 & 3.13 & 24.34 & 25.25 & 13.08 & 31.26 & 47.04 & 22.25 & 20.98 \\
          BLIP2+CRL & \underline{20.17} & \underline{25.16} & \underline{10.73} & \underline{46.40} & \underline{48.55} & \underline{33.47} & \underline{59.43} & \underline{71.96} & \underline{53.00} & \underline{40.99} \\
          \textbf{BLIP2+OD-CRL} & \textbf{26.17} & \textbf{29.77} & \textbf{15.23} & \textbf{58.07} & \textbf{61.90} & \textbf{48.76} & \textbf{62.52} & \textbf{83.33} & \textbf{62.71} & \textbf{49.83} \\
          \bottomrule
        \end{tabular}
    \end{small}
  \end{center}
\end{table*}

\subsection{Customized Few-shot Classification}
\textbf{Dataset.} Similar to customized clustering, we continue to use the Clevr4-10k and Cards datasets to evaluate OD-CRL's performance on customized few-shot classification.

\textbf{Setup.} Consistent with \cref{sec1}, CLIP is used to extract initial representations for OD-CRL. To ensure fair comparison, we employ the logistic regression function from the scikit-learn package for few-shot learning, with 1, 5, and 10 samples per class, respectively. To mitigate the impact of randomness, we perform 20 random selections of training data for each experimental setting and report the average results. BLIP2-based experiments are also incorporated to demonstrate the generalizability of OD-CRL

\textbf{Metric.} For the task of customized few-shot classification, we adopt accuracy as the evaluation metric.


\textbf{Performance.} \cref{tab2} demonstrates the effectiveness of OD-CRL on customized few-shot classification. Compared to vanilla CLIP, OD-CRL achieves substantial improvements, with an average gain of 46.23\% across all settings. Against the strong baseline CRL, OD-CRL consistently delivers superior performance, particularly when training examples are extremely limited. Notably, for color-based classification on Clevr4-10k, CLIP+OD-CRL achieves a 25.44\% improvement over CLIP+CRL in the 1-shot setting. These consistent performance gains confirm OD-CRL's ability to extract discriminative conditional representations that effectively align with specific criteria.

\begin{table*}[ht]
  \caption{Performance on the task of customized few-shot classification.}
  \label{tab2}
  \begin{center}
    \begin{small}
    \setlength{\tabcolsep}{5pt}
        \begin{tabular}{c|ccc|ccc|ccc|ccc|c}
          \toprule
          & \multicolumn{6}{c|}{Clevr4-10k} & \multicolumn{6}{c|}{Cards} & \\
          \cmidrule{2-13}
          Method &  \multicolumn{3}{c|}{Shape} & \multicolumn{3}{c|}{Color} & \multicolumn{3}{c|}{number} & \multicolumn{3}{c|}{suits} & Mean\\
          \cmidrule{2-13}
          & 1 & 5 & 10 & 1 & 5 & 10 & 1 & 5 & 10 & 1 & 5 & 10 & \\ 
          \midrule
          CLIP & 58.16 & 83.17 & 89.47 & 26.85 & 57.33 & 70.00 & \underline{20.63} & 33.73 & 41.84 & \underline{37.65} & 56.36 & 65.98 & 53.43\\
          CLIP+CRL & \underline{58.69} & \underline{86.63} & \underline{92.28} & \underline{65.71} & \underline{89.13} & \underline{93.21} & 17.65 & \underline{44.51} & \underline{51.08} & 37.11 & \underline{67.18} & \underline{72.64} & \underline{64.65}\\
          \textbf{CLIP+OD-CRL} & \textbf{68.81} & \textbf{90.26} & \textbf{94.83} & \textbf{82.43} & \textbf{91.43} & \textbf{93.42} & \textbf{33.50} & \textbf{49.55} & \textbf{54.28} & \textbf{52.58} & \textbf{71.79} & \textbf{76.45} & \textbf{71.61}\\
          \midrule
          BLIP2 & \underline{72.91} & \underline{95.18} & 97.88 & 28.96 & 60.53 & 73.25 & \underline{27.21} & 45.54 & 55.94 & 44.61 & 70.14 & 78.16 & 62.53\\
          BLIP2+CRL & 71.53 & 94.94 & \underline{98.00} & \underline{59.73} & \underline{85.34} & \underline{91.04} & 26.44 & \underline{61.64} & \underline{70.14} & \underline{50.44} & \underline{79.91} & \underline{84.13} & \underline{72.77}\\
          \textbf{BLIP2+OD-CRL} & \textbf{88.21} & \textbf{97.75} & \textbf{98.69} & \textbf{78.44} & \textbf{88.74} & \textbf{91.47} & \textbf{48.60} & \textbf{68.39} & \textbf{73.81} & \textbf{71.24} & \textbf{82.44} & \textbf{84.41} & \textbf{81.02}\\
          \bottomrule
        \end{tabular}
    \end{small}
  \end{center}
\end{table*}

\subsection{Customized Fashion Retrieval}
\textbf{Dataset.} Following prior work \cite{ma2020fine}, we adopt the category and attribute prediction benchmark from DeepFashion \cite{liu2016deepfashion} as the evaluation dataset for this task. The dataset comprises 221k, 27k, and 27k images for training, validation, and testing. This benchmark encompasses five criteria: ``texture", ``fabric", ``shape", ``part", and ``style", containing 156, 218, 180, 216, and 230 distinct values, respectively.

\textbf{Setup.} For each query sample, we compute the cosine similarity between the query and all candidate samples. The candidate samples are then ranked in descending order of similarity, and retrieval performance is evaluated accordingly. To facilitate similarity-based retrieval, we leverage OD-CRL to generate criterion-specific representations for all samples, and these representations are further optimized by a two-layer MLP combined with triplet loss.

\textbf{Metric.} We adopt Mean Average Precision (mAP) as the evaluation metric for the customized fashion retrieval task.


\textbf{Performance.} \cref{tab3} presents the performance comparison on customized fashion retrieval. Equipped with merely a simple MLP, our proposed CLIP+OD-CRL achieves the best mean mAP of 10.26\%. This result outperforms previous state-of-the-art methods, including Triplet \cite{veit2017conditional}, CSN \cite{veit2017conditional}, ASEN \cite{ma2020fine}, ASEN++ \cite{dong2021fine}, and RPF \cite{dong2023region}. Without training, CLIP+OD-CRL already surpasses vanilla CLIP by 34.54\% in mean mAP, validating its effectiveness in generating discriminative criterion-specific representations.

\begin{table}[ht]
\centering
\caption{Performance on the task of customized fashion retrieval. The symbol $\dagger$ signifies that training is conducted.}
\label{tab3}
\begin{center}
\begin{small}
\setlength{\tabcolsep}{3pt}
\begin{tabular}{ccccccc}
\toprule
Method & Texture & Fabric & Shape & Part & Style & Mean \\
\midrule
Triplet & 13.26 & 6.28 & 9.49 & 4.43 & 3.33 & 7.36 \\
CSN & 14.09 & 6.39 & 11.07 & 5.13 & 3.49 & 8.01 \\
ASEN & 15.13 & 7.11 & 12.39 & 5.51 & 3.56 & 8.74 \\
ASEN++ & 15.60 & 7.67 & 14.31 & 6.60 & 4.07 & 9.64 \\
RPF & \textbf{15.62} & \textbf{8.30} & 15.02 & \textbf{7.38} & 4.77 & \underline{10.22} \\
CLIP & 9.14 & 4.68 & 7.86 & 4.26 & 4.48 & 6.08 \\
CLIP+CRL & 11.03 & 6.76 & 11.80 & 5.56 & 4.42 & 7.93 \\
CLIP+CRL$^\dagger$ & 13.53 & 7.70 & \underline{15.59} & 6.69 & \underline{5.60} & 9.94 \\
\midrule
CLIP+OD-CRL & 11.04 & 6.74 & 12.46 & 5.69 & 4.95 & 8.18 \\
CLIP+OD-CRL$^\dagger$ & \underline{13.88} & \underline{7.88} & \textbf{16.05} & \underline{6.84} & \textbf{6.14} & \textbf{10.26} \\
\bottomrule
\end{tabular}
\end{small}
\end{center}
\end{table}

\begin{table}[ht]
  \caption{Ablation study}
  \label{tab4}
  \begin{center}
    \begin{small}
    \setlength{\tabcolsep}{4pt}
        \begin{tabular}{cc|ccc|ccc}
          \toprule
         \multicolumn{2}{c|}{Components} & \multicolumn{6}{c}{Clevr4-10k}\\
          \midrule
          \multirow{2.5}{*}{AOBO} & \multirow{2.5}{*}{NSDP} & \multicolumn{3}{c|}{Shape} & \multicolumn{3}{c}{Color}\\
          \cmidrule{3-8}
          ~ & ~ & NMI & ACC & ARI & NMI & ACC & ARI \\ 
          \midrule
          ~ & ~ & 78.40 & 81.83 & 71.35 & 88.09 & \underline{87.26} & 81.07\\
          \Checkmark & ~ & \underline{82.37} & \underline{85.10} & \underline{77.32} & \underline{88.13} & 86.32 & \underline{81.24}\\
          \Checkmark & \Checkmark & \textbf{86.67} & \textbf{90.69} & \textbf{84.01} & \textbf{89.99} & \textbf{89.88} & \textbf{85.09}\\
          \midrule
          \midrule
          \multicolumn{2}{c|}{Components} & \multicolumn{6}{c}{Cards}\\
          \midrule
          \multirow{2.5}{*}{AOBO} & \multirow{2.5}{*}{NSDP} & \multicolumn{3}{c|}{Number} & \multicolumn{3}{c}{Suits}\\
          \cmidrule{3-8}
          ~ & ~ & NMI & ACC & ARI & NMI & ACC & ARI \\ 
          \midrule
          ~ & ~ & 24.68 & 28.59 & 12.27 & 37.70 & 62.08 & 34.36\\
          \Checkmark & ~ & \underline{38.03} & \underline{40.29} & \underline{23.89} & \underline{39.69} & \underline{70.45} & \underline{39.08}\\
          \Checkmark & \Checkmark & \textbf{38.78} & \textbf{44.45} & \textbf{28.40} & \textbf{41.61} & \textbf{73.22} & \textbf{42.20}\\
          \bottomrule
        \end{tabular}
    \end{small}
  \end{center}
\end{table}

\subsection{Ablation Study}

To validate the effectiveness of components in OD-CRL, we conduct ablation studies on the customized clustering task. Although customized clustering is one of downstream tasks for conditional representation learning, it provides the most direct assessment of representation quality. We perform the ablation experiments on the Clevr4-10k and Cards datasets. 

As shown in \cref{tab4}, we systematically evaluate two key components: Adaptive Orthogonal Basis Optimization and Null-Space Denoising Projection. On Clevr4-10k, incorporating AOBO alone improves the shape clustering NMI from 78.40\% to 82.37\%, demonstrating that Adaptive Orthogonal Basis Optimization offers a higher-quality basis for disentangling different attributes. Incorporating NSDP further improves the NMI to 86.67\%, as the null-space denoising projection effectively removes interference from irrelevant semantic components within the noise subspace. Similar trends are observed on the Cards dataset, where the complete method consistently achieves the best performance across all metrics. These results confirm that both components are essential and complementary to OD-CRL's overall performance.

\begin{figure*}[ht]
  \begin{center}
    \centerline{\includegraphics[width=\textwidth]{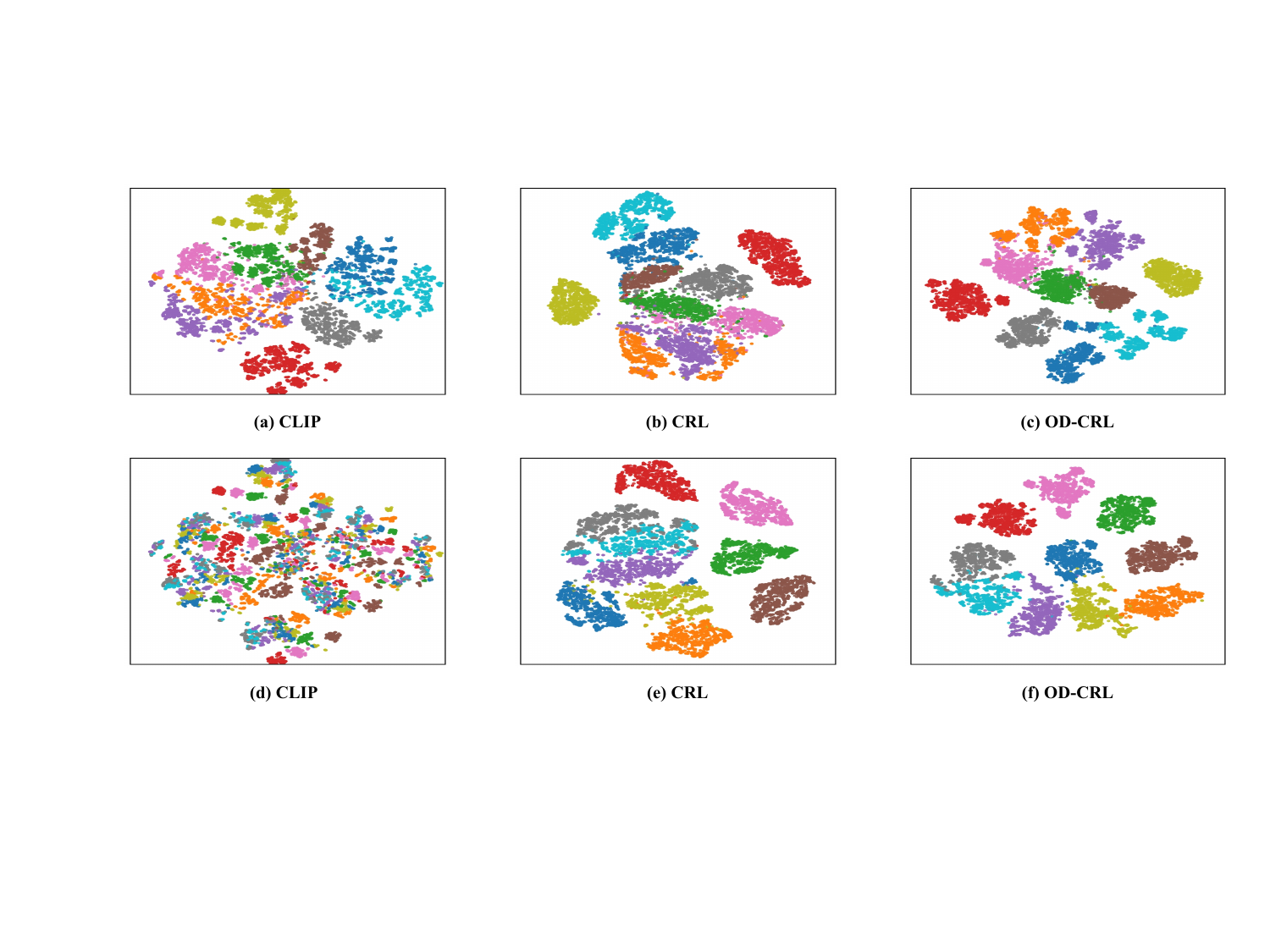}}
    \caption{T-SNE visualizations of the representations learned by CLIP, CRL, and OD-CRL on the Clevr4-10k dataset for the ``shape" criterion (a, b, c) and the ``color" criterion (d, e, f).}
    \label{fig4}
  \end{center}
\end{figure*}

\begin{figure}[ht]
  \begin{center}
    \centerline{\includegraphics[width=\columnwidth]{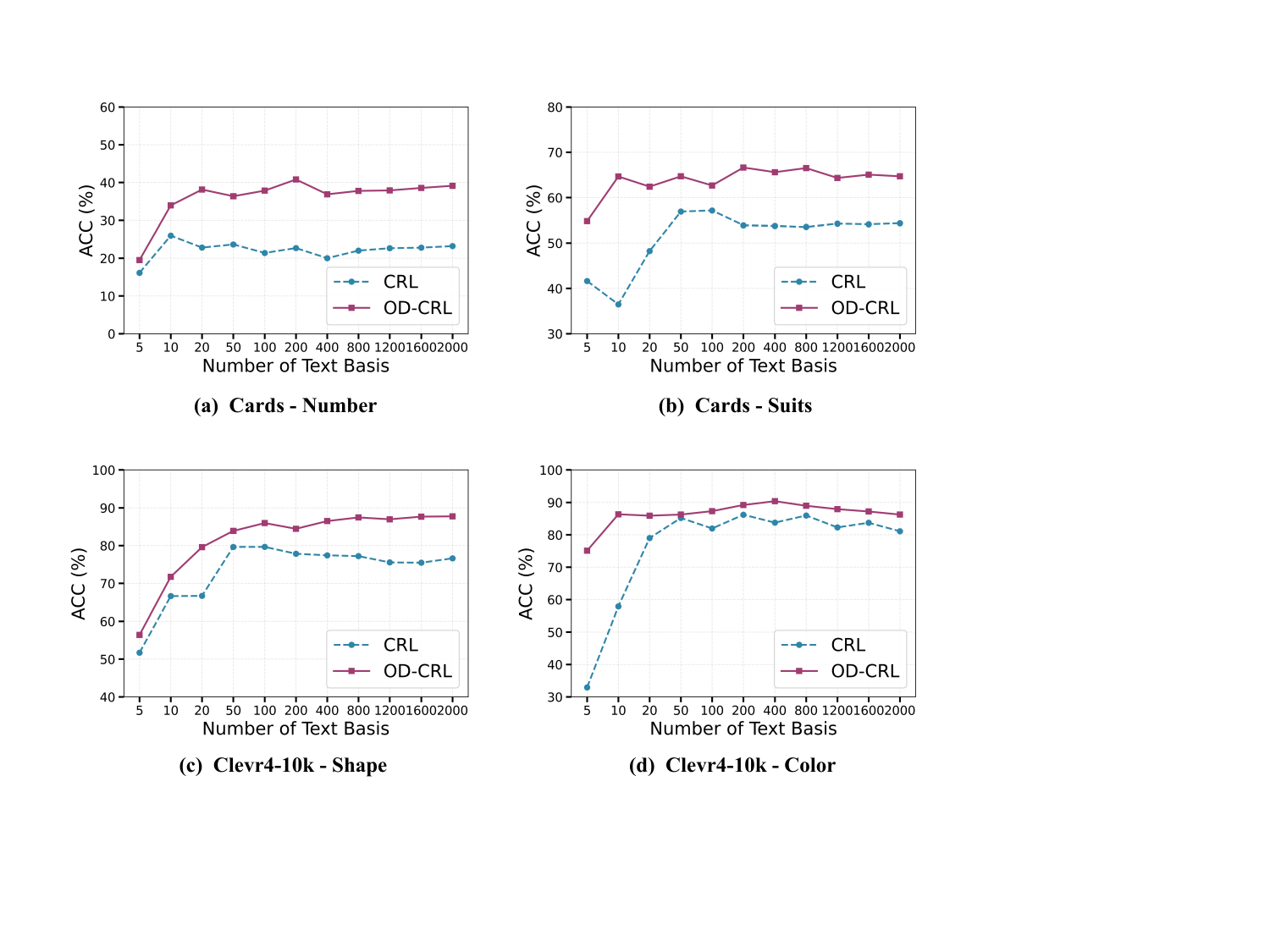}}
    \caption{Performance with different numbers of basis vectors.}
    \label{fig1}
  \end{center}
\end{figure}

\subsection{Visualization}
\cref{fig4} presents t-SNE visualizations of learned representations on the Clevr4-10k dataset under the ``shape" (a-c) and ``color" (d-f) criteria. CLIP's general representations (a, d) exhibit severe entanglement across semantic categories, failing to capture criterion-specific distinctions. While CRL (b, e) shows improved cluster formation, considerable inter-cluster overlap persists. In contrast, OD-CRL (c, f) achieves well-separated and compact clusters across both criteria, demonstrating superior discriminative performance.

\subsection{Analysis on Text Basis}
In practical scenarios, the optimal number of text basis vectors remains unknown. To validate the robustness of OD-CRL to text basis quantity, we examine its performance on the customized clustering task under varying numbers of of basis vectors. As shown in \cref{fig1}, we conduct experiments on both Clevr4-10k and Cards datasets with the number of basis vectors ranging from 5 to 2000. 

OD-CRL exhibits stable and superior performance across all criteria, remaining insensitive to the number of basis vectors. In contrast, CRL demonstrates significant performance variations, particularly in color-based clustering on Clevr4-10k. Moreover, regardless of the initial basis cardinality, OD-CRL utilizes only a small set of optimized basis vectors constructed through Adaptive Orthogonal Basis Optimization. This enables OD-CRL to achieve significant efficiency and performance advantages in scenarios where the semantics corresponding to the criterion are overly broad.

\section{Conclusion}

In this paper, We identify the key factors that influence the performance of existing conditional representation learning methods: (1) quality of LLM-generated text bases; (2) mutual interference between different conditional subspaces. To address these limitations, we propose OD-CRL, which combines Adaptive Orthogonal Basis Optimization and Null-Space Denoising Projection to extract conditional representations without task-specific training. 

Extensive experiments on customized clustering, few-shot classification, and fashion retrieval demonstrate that OD-CRL outperforms existing methods across diverse evaluation criteria. Ablation studies confirm the effectiveness of each component, while robustness analysis reveals stable performance across varying numbers of basis vectors. 

Given the growing importance of conditional representation learning, we hope that OD-CRL represents a significant step toward more efficient and flexible approaches in this domain.

\section*{Impact Statement}
This paper presents work whose goal is to advance the field of Machine
Learning. There are many potential societal consequences of our work, none
which we feel must be specifically highlighted here.

\nocite{langley00}

\bibliography{main}
\bibliographystyle{icml2026}

\newpage
\appendix
\onecolumn

\section{Datasets and Corresponding Prompts}
\label{appendix:datasets}

In this section, we provide detailed descriptions of the datasets used in our experiments, along with the corresponding prompts for LLM and VLM. We organize the content by dataset, presenting the dataset structure and the specific prompts employed for each criterion.

\subsection{Clevr4-10k}
\label{subsec:clevr4}

\subsubsection{Dataset Structure}

Clevr4-10k \cite{vaze2023no} is a synthetic benchmark dataset built upon the CLEVR dataset. It consists of 10,531 images featuring multiple 3D objects positioned within fixed scenes. Each image in Clevr4-10k is annotated with all four criteria simultaneously, allowing for comprehensive evaluation of conditional representation learning methods.  \cref{clevr4} shows the structure of the Clevr4-10k dataset.

\begin{table}[h]
    \centering
    \caption{Structure of the Clevr4-10k dataset.}
    \label{clevr4}
    \begin{tabular}{lcc}
        \toprule
        Criterion & Number of Classes & Class Examples \\
        \midrule
        Shape & 10 & cone, cube, cylinder, diamond, gear, ... \\
        Texture & 10 & brick, checkered, chessboard, circles, emojis, ... \\
        Color & 10 & blue, brown, cyan, gray, green, ... \\
        Count & 10 & 1, 2, 3, 4, 5, ... \\
        \bottomrule
    \end{tabular}
\end{table}

\subsubsection{LLM Prompts}
For Clevr4-10k, we design criterion-specific prompts to query the LLM for generating descriptive texts. Below we present the prompts used for each criterion: 

\small
\texttt{Please generate common expressions to describe the shape (Texture / Color / Count), as many as possible, formatted as: ["...", "...", "..."]. Ensure all items are unique and written in a single line, without any nested lists or additional formatting. Only generate the list, and do not include any additional information.}

\subsubsection{VLM Prompts}

For encoding the LLM-generated texts using the VLM, we construct prompts that combine the criterion with each descriptive text. The general template is:

\small
\texttt{Objects with the [CRITERION] of [TEXT].}

where \texttt{[CRITERION]} is replaced with the specific criterion (e.g., ``shape", ``texture", ``color", ``count") and \texttt{[TEXT]} is replaced with each generated descriptive text.

\subsection{Cards}
\label{subsec:cards}

\subsubsection{Dataset Structure}

The Cards dataset \cite{yao2023augdmc} is a poker card image dataset comprising 8,029 images of playing cards. Each card is photographed under various conditions, providing natural variations in lighting, angle, and background. \cref{cards} shows the structure of the Cards dataset.

\begin{table}[h]
    \centering
    \caption{Structure of the Cards dataset.}
    \label{cards}
    \begin{tabular}{lcc}
        \toprule
        Criterion & Number of Classes & Class Examples \\
        \midrule
        Number & 13 & Ace, Two, three, ..., Jack, Queen, King \\
        Suit & 4 & Clubs($\clubsuit$), Diamonds($\diamondsuit$), Hearts($\heartsuit$), Spades($\spadesuit$) \\
        \bottomrule
    \end{tabular}
\end{table}

\subsubsection{LLM Prompts}

For Cards, we design criterion-specific prompts to query the LLM for generating descriptive texts. Below we present the prompts used for each criterion:

\small
\texttt{Please generate common expressions to describe the number (suit) of playing cards, as many as possible, formatted as: ["...", "...", "..."]. Ensure all items are unique and written in a single line, without any nested lists or additional formatting. Only generate the list, and do not include any additional information.}

\subsubsection{VLM Prompts}

For encoding the LLM-generated texts using the VLM, we construct prompts that combine the criterion with each descriptive text. The general template is:

\small
\texttt{A poker card with the [CRITERION] of [TEXT].}

\noindent where \texttt{[CRITERION]} is replaced with the specific criterion (e.g., ``number", ``suit") and \texttt{[TEXT]} is replaced with each generated descriptive text.

\subsection{DeepFashion}
\label{subsec:deepfashion}

\subsubsection{Dataset Structure}

DeepFashion \cite{liu2016deepfashion} is a large-scale fashion dataset widely used for clothing-related computer vision tasks. Following previous works \cite{ma2020fine}, we use the category and attribute prediction benchmark split, which consists of 221,000 training images, 27,000 validation images, and 27,000 test images. Each image in DeepFashion is annotated with multiple attribute criteria simultaneously. Table~\ref{deepfashion} shows the structure of the DeepFashion dataset.

\begin{table}[h]
    \centering
    \caption{Structure of the DeepFashion dataset.}
    \label{deepfashion}
    \begin{tabular}{lcl}
        \toprule
        Criterion & Number of Classes & Class Examples \\
        \midrule
        Texture & 156 & abstract, animal, baroque, camo, circle, ... \\
        Fabric & 218 & acid, bead, canvas, cloud, feather, ... \\
        Shape & 180 & a-line, boxy, cropped, flared, shirt, ... \\
        Part & 216 & bell, bow, buttoned, cowl, flat, ... \\
        Style & 230 & baseball, bike, eagle, fox, mickey, ... \\
        \bottomrule
    \end{tabular}
\end{table}

\subsubsection{LLM Prompts}

For DeepFashion, we design criterion-specific prompts to query the LLM for generating descriptive texts. Below we present the prompts used for each criterion:

\small
\texttt{Please generate common expressions to describe the texture (fabric / shape / part / style) of fashion items, as many as possible, formatted as: ["...", "...", "..."]. Ensure all items are unique and written in a single line, without any nested lists or additional formatting. Only generate the list, and do not include any additional information.}

\subsubsection{VLM Prompts}

For encoding the LLM-generated texts using the VLM, we construct prompts that combine the criterion with each descriptive text. The general template is:

\small
\texttt{A fashion with the [CRITERION] of [TEXT].}

\noindent where \texttt{[CRITERION]} is replaced with the specific criterion (e.g., ``texture", ``fabric", ``shape", ``part", ``style") and \texttt{[TEXT]} is replaced with each generated descriptive text.

\section{Theoretical Analysis of Null-Space Denoising Projection}
\label{appendix:null_space_proof}

In this section, we provide a theoretical analysis demonstrating that the benefits of Null-Space Denoising Projection (noise reduction) substantially outweigh its costs (partial loss of target semantics).

\subsection{Preliminaries and Notation}

Recall that the image embedding admits the decomposition:
\begin{equation}
\mathbf{I} = \mathbf{R}_{t} \mathbf{T}^*_{t} + \mathbf{R}_{n} \mathbf{T}^*_{n} + \boldsymbol{\epsilon}
\end{equation}
where $\mathbf{T}^*_t \in \mathbb{R}^{k^* \times d}$ and $\mathbf{T}^*_n \in \mathbb{R}^{p \times d}$ are orthonormal bases satisfying:
\begin{equation}
\mathbf{T}^*_t (\mathbf{T}^*_t)^\top = \mathbf{I}_{k^*}, \quad \mathbf{T}^*_n (\mathbf{T}^*_n)^\top = \mathbf{I}_{p}
\end{equation}

We define the projection matrices onto the target and noise subspaces as:
\begin{equation}
\mathbf{P}_t = (\mathbf{T}^*_t)^\top \mathbf{T}^*_t, \quad \mathbf{P}_n = (\mathbf{T}^*_n)^\top \mathbf{T}^*_n
\end{equation}

Through Adaptive Orthogonal Basis Optimization, the target and noise subspaces are approximately orthogonal but not perfectly so. We quantify their non-orthogonality by:
\begin{equation}
\left\| \mathbf{T}^*_t (\mathbf{T}^*_n)^\top \right\| = \left\| \mathbf{T}^*_n (\mathbf{T}^*_t)^\top \right\| = \epsilon \ll 1
\end{equation}
where $\|\cdot\|$ denotes the spectral norm.

For notational convenience, we define the cross-subspace coupling matrix:
\begin{equation}
\mathbf{A} = \mathbf{T}^*_t (\mathbf{T}^*_n)^\top \in \mathbb{R}^{k^* \times p}, \quad \text{with } \|\mathbf{A}\| = \epsilon
\end{equation}

\subsection{Quantifying Benefits and Costs}

We analyze the effects of null-space projection by examining two quantities:

\textbf{Benefit (Noise Reduction):} The interference from the noise component $\mathbf{R}_n \mathbf{T}^*_n$ that would otherwise leak into the target subspace is characterized by:
\begin{equation}
\mathcal{B} = \left\| \mathbf{R}_{n} \mathbf{T}^*_{n} (\mathbf{T}^*_t)^\top \right\|_F
\end{equation}

\textbf{Cost (Signal Loss):} The target-aligned component $\mathbf{R}_t \mathbf{T}^*_t$ that is inadvertently removed due to its partial overlap with the noise subspace is characterized by:
\begin{equation}
\mathcal{C} = \left\| \mathbf{R}_{t} \mathbf{T}^*_{t} \mathbf{P}_n (\mathbf{T}^*_t)^\top \right\|_F
\end{equation}

\subsection{Detailed Derivations}

\begin{lemma}[Benefit Term Expansion]
\label{lemma:benefit}
The noise leakage term can be expressed as:
\begin{equation}
\mathbf{R}_{n} \mathbf{T}^*_{n} (\mathbf{T}^*_t)^\top = \mathbf{R}_n \mathbf{A}^\top
\end{equation}
\end{lemma}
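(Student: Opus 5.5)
The identity in Lemma~\ref{lemma:benefit} is purely algebraic, so the plan is to rewrite the right-hand factor $\mathbf{T}^*_n(\mathbf{T}^*_t)^\top$ in terms of the cross-subspace coupling matrix $\mathbf{A}$ defined in the preliminaries. By associativity of matrix multiplication, I first group the product as
\begin{equation*}
\mathbf{R}_{n} \mathbf{T}^*_{n} (\mathbf{T}^*_t)^\top = \mathbf{R}_{n}\bigl(\mathbf{T}^*_{n} (\mathbf{T}^*_t)^\top\bigr),
\end{equation*}
which isolates the $p \times k^*$ block $\mathbf{T}^*_{n} (\mathbf{T}^*_t)^\top$.

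Next, I use the definition $\mathbf{A} = \mathbf{T}^*_t (\mathbf{T}^*_n)^\top \in \mathbb{R}^{k^* \times p}$. Applying the transpose rule $(\mathbf{X}\mathbf{Y})^\top = \mathbf{Y}^\top \mathbf{X}^\top$ gives
\begin{equation*}
\mathbf{A}^\top = \bigl(\mathbf{T}^*_t (\mathbf{T}^*_n)^\top\bigr)^\top = \mathbf{T}^*_n (\mathbf{T}^*_t)^\top .
\end{equation*}
Substituting this into the grouped product yields $\mathbf{R}_n \mathbf{A}^\top$, as claimed. As a sanity check on dimensions, $\mathbf{R}_n \in \mathbb{R}^{m\times p}$ and $\mathbf{A}^\top \in \mathbb{R}^{p \times k^*}$, so both sides are $m \times k^*$, which is consistent with a leakage term living in target coordinates.

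There is no real obstacle: orthonormality of the bases is not even needed for this identity. The only point to state explicitly is that $\mathbf{A}^\top$, and not $\mathbf{A}$, appears, because the noise basis is on the left. This orientation is what will later allow the bound $\mathcal{B} \le \|\mathbf{R}_n\|_F\,\|\mathbf{A}\| = \epsilon\|\mathbf{R}_n\|_F$ via $\|\mathbf{A}^\top\| = \|\mathbf{A}\|$.
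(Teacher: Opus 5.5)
Your argument is correct and matches the paper's proof: both transpose the definition $\mathbf{A} = \mathbf{T}^*_t(\mathbf{T}^*_n)^\top$ to get $\mathbf{A}^\top = \mathbf{T}^*_n(\mathbf{T}^*_t)^\top$ and substitute; your associativity grouping and dimension check just spell out what the paper leaves implicit. As an aside, the bound $\mathcal{B} \le \epsilon\|\mathbf{R}_n\|_F$ you anticipate is valid, but the paper's main theorem needs a \emph{lower} bound on $\mathcal{B}$, and this identity alone does not supply one.
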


\begin{proof}
By direct substitution using $\mathbf{A}^\top = \mathbf{T}^*_n (\mathbf{T}^*_t)^\top$:
\begin{equation}
\mathbf{R}_{n} \mathbf{T}^*_{n} (\mathbf{T}^*_t)^\top = \mathbf{R}_n \mathbf{A}^\top
\end{equation}
\end{proof}

\begin{lemma}[Cost Term Expansion]
\label{lemma:cost}
The signal loss term can be expressed as:
\begin{equation}
\mathbf{R}_{t} \mathbf{T}^*_{t} \mathbf{P}_n (\mathbf{T}^*_t)^\top = \mathbf{R}_t \mathbf{A} \mathbf{A}^\top
\end{equation}
\end{lemma}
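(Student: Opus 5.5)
The identity follows by substituting the definition of $\mathbf{P}_n$ and then regrouping the matrix product with associativity, the same way Lemma~\ref{lemma:benefit} was handled. First I replace the projector by its definition, $\mathbf{P}_n = (\mathbf{T}^*_n)^\top \mathbf{T}^*_n$. This gives
\begin{equation*}
\mathbf{R}_{t} \mathbf{T}^*_{t} \mathbf{P}_n (\mathbf{T}^*_t)^\top = \mathbf{R}_{t} \mathbf{T}^*_{t} (\mathbf{T}^*_n)^\top \mathbf{T}^*_n (\mathbf{T}^*_t)^\top .
\end{equation*}

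Next I regroup the product as $\mathbf{R}_t \bigl(\mathbf{T}^*_t (\mathbf{T}^*_n)^\top\bigr)\bigl(\mathbf{T}^*_n (\mathbf{T}^*_t)^\top\bigr)$. The first bracket is the coupling matrix $\mathbf{A} \in \mathbb{R}^{k^* \times p}$ by definition. The second bracket equals $\mathbf{A}^\top$, because $\bigl(\mathbf{T}^*_t (\mathbf{T}^*_n)^\top\bigr)^\top = \mathbf{T}^*_n (\mathbf{T}^*_t)^\top$. The product therefore collapses to $\mathbf{R}_t \mathbf{A}\mathbf{A}^\top$, as claimed.

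I would also confirm that the dimensions match. The factors are $\mathbf{R}_t \in \mathbb{R}^{m\times k^*}$, $\mathbf{A}\in\mathbb{R}^{k^*\times p}$ and $\mathbf{A}^\top\in\mathbb{R}^{p\times k^*}$, so the result lies in $\mathbb{R}^{m\times k^*}$. This is the same space as the target projection $\mathbf{R}_t$ it is compared against.

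There is no real obstacle. The identity is purely algebraic and does not need the orthonormality assumptions or the smallness of $\epsilon$; those only matter in the later comparison of $\mathcal{B}$ and $\mathcal{C}$. For that step, this form gives $\mathcal{C}\le \|\mathbf{R}_t\|_F\,\epsilon^2$, while Lemma~\ref{lemma:benefit} gives a term of first order in $\epsilon$.
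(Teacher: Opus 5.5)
Your proof is correct and follows essentially the same route as the paper: substitute $\mathbf{P}_n = (\mathbf{T}^*_n)^\top \mathbf{T}^*_n$, regroup by associativity, and recognize the two bracketed factors as $\mathbf{A}$ and $\mathbf{A}^\top$. Your dimension check is an accurate addition, as is your remark that this identity needs neither orthonormality nor a small $\epsilon$.
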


\begin{proof}
Expanding $\mathbf{P}_n = (\mathbf{T}^*_n)^\top \mathbf{T}^*_n$:
\begin{align}
\mathbf{R}_{t} \mathbf{T}^*_{t} \mathbf{P}_n (\mathbf{T}^*_t)^\top &= \mathbf{R}_{t} \mathbf{T}^*_{t} (\mathbf{T}^*_n)^\top \mathbf{T}^*_n (\mathbf{T}^*_t)^\top \\
&= \mathbf{R}_{t} \underbrace{\mathbf{T}^*_{t} (\mathbf{T}^*_n)^\top}_{\mathbf{A}} \underbrace{\mathbf{T}^*_n (\mathbf{T}^*_t)^\top}_{\mathbf{A}^\top} \\
&= \mathbf{R}_t \mathbf{A} \mathbf{A}^\top
\end{align}
\end{proof}

\subsection{Main Theorem}

\begin{theorem}[Benefit-Cost Ratio of Null-Space Denoising]
\label{thm:benefit_cost}
Under the approximately orthogonal subspace assumption with $\|\mathbf{A}\| = \|\mathbf{T}^*_t (\mathbf{T}^*_n)^\top\| = \epsilon \ll 1$, the ratio of noise reduction benefit to signal loss cost satisfies:
\begin{equation}
\frac{\mathcal{B}}{\mathcal{C}} \geq \frac{\|\mathbf{R}_n\|_F}{\|\mathbf{R}_t\|_F} \cdot \frac{1}{\epsilon}
\end{equation}
When the target and noise components have comparable magnitudes ($\|\mathbf{R}_t\|_F \approx \|\mathbf{R}_n\|_F$), this ratio scales as $\mathcal{O}(1/\epsilon) \gg 1$.
\end{theorem}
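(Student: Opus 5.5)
The plan is to bound the numerator $\mathcal{B}$ from below and the denominator $\mathcal{C}$ from above, working with the closed forms from Lemma~\ref{lemma:benefit} and Lemma~\ref{lemma:cost}. Those lemmas give
\begin{equation*}
\mathcal{B} = \|\mathbf{R}_n \mathbf{A}^\top\|_F, \qquad \mathcal{C} = \|\mathbf{R}_t \mathbf{A}\mathbf{A}^\top\|_F ,
\end{equation*}
so the problem reduces to estimating Frobenius norms of products with the small coupling matrix $\mathbf{A}$.

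The cost side is routine. I will use submultiplicativity in the mixed form $\|\mathbf{X}\mathbf{Y}\|_F \le \|\mathbf{X}\|_F\,\|\mathbf{Y}\|$ twice. This gives
\begin{equation*}
\mathcal{C} \le \|\mathbf{R}_t\|_F\,\|\mathbf{A}\|\,\|\mathbf{A}^\top\| = \epsilon^2\,\|\mathbf{R}_t\|_F .
\end{equation*}
This quadratic dependence on $\epsilon$ is the heart of the argument. The signal loss is second order in the overlap, because a target vector must be rotated into the noise subspace and then back.

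The benefit side is the main obstacle. A lower bound on $\|\mathbf{R}_n\mathbf{A}^\top\|_F$ cannot follow from $\|\mathbf{A}\| = \epsilon$ alone. If the rows of $\mathbf{R}_n$ lie in the kernel of $\mathbf{A}^\top$, then $\mathcal{B} = 0$.

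I therefore plan to make explicit the assumption implicit in ``approximately orthogonal with overlap $\epsilon$'': the coupling is spread evenly over the relevant directions, i.e.\ $\sigma_{\min}(\mathbf{A}^\top) \ge c\,\epsilon$ for some constant $c$. The minimum singular value should be taken over the row space of $\mathbf{R}_n$, or over all of $\mathbb{R}^p$ when $p \le k^*$. The model case is $\mathbf{A}^\top\mathbf{A} \approx \epsilon^2 \mathbf{I}_p$. Under this assumption,
\begin{equation*}
\mathcal{B} = \|\mathbf{R}_n \mathbf{A}^\top\|_F \ge \sigma_{\min}(\mathbf{A}^\top)\,\|\mathbf{R}_n\|_F \ge c\,\epsilon\,\|\mathbf{R}_n\|_F .
\end{equation*}
This holds row by row, since $\|\mathbf{r}\mathbf{A}^\top\| \ge \sigma_{\min}\|\mathbf{r}\|$, and squaring and summing over rows gives the Frobenius bound.

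Dividing the two bounds gives
\begin{equation*}
\frac{\mathcal{B}}{\mathcal{C}} \ge \frac{c\,\epsilon\,\|\mathbf{R}_n\|_F}{\epsilon^2\,\|\mathbf{R}_t\|_F} = c\,\frac{\|\mathbf{R}_n\|_F}{\|\mathbf{R}_t\|_F}\cdot\frac{1}{\epsilon} .
\end{equation*}
With $c = 1$ in the isotropic case, this is exactly the stated inequality. The final claim then follows by substituting $\|\mathbf{R}_t\|_F \approx \|\mathbf{R}_n\|_F$, which gives a ratio of order $1/\epsilon \gg 1$.

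In writing this up, I would flag clearly that the lower bound on $\mathcal{B}$ is the only step requiring more than the spectral-norm hypothesis. If one wants to avoid the conditioning assumption, a weaker but unconditional statement is available. The cost is uniformly $O(\epsilon^2)$, while the benefit is $\Theta(\epsilon)$ whenever the noise actually leaks, that is, whenever $\mathbf{R}_n$ has non-negligible energy along the top singular directions of $\mathbf{A}^\top$.
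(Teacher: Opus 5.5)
Your proposal is correct and follows the paper's proof essentially step for step. You lower-bound $\mathcal{B}=\|\mathbf{R}_n\mathbf{A}^\top\|_F$ through a smallest-singular-value inequality, upper-bound $\mathcal{C}=\|\mathbf{R}_t\mathbf{A}\mathbf{A}^\top\|_F\le\epsilon^2\|\mathbf{R}_t\|_F$ by submultiplicativity, and divide. The one difference is in your favor. The paper waves through a ``non-degenerate'' assumption with a constant $0<c_1<1$ and then sets $c_1\sim 1$. You instead state the needed hypothesis precisely: the smallest singular value of $\mathbf{A}^\top$ on the row space of $\mathbf{R}_n$ is at least $c\,\epsilon$, and this argument gives the stated constant only when $c=1$, i.e.\ isotropic coupling. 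You are also right that the inequality cannot follow from $\|\mathbf{A}\|=\epsilon$ alone.
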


\begin{proof}
We derive bounds for both $\mathcal{B}$ and $\mathcal{C}$ using the expressions from Lemmas~\ref{lemma:benefit} and~\ref{lemma:cost}.

\textbf{Benefit Analysis.} From Lemma~\ref{lemma:benefit}:
\begin{equation}
\mathcal{B} = \left\| \mathbf{R}_n \mathbf{A}^\top \right\|_F
\end{equation}

To establish a lower bound, we use the property that for any matrices $\mathbf{X}$ and $\mathbf{Y}$:
\begin{equation}
\|\mathbf{X}\mathbf{Y}\|_F \geq \sigma_{\min}(\mathbf{Y}) \|\mathbf{X}\|_F
\end{equation}
where $\sigma_{\min}(\mathbf{Y})$ denotes the smallest singular value.

For typical noise components that are not degenerate with respect to the cross-subspace coupling, we have:
\begin{equation}
\mathcal{B} = \|\mathbf{R}_n \mathbf{A}^\top\|_F \sim \|\mathbf{R}_n\|_F \cdot \|\mathbf{A}^\top\| = \|\mathbf{R}_n\|_F \cdot \epsilon
\end{equation}

More precisely, under the assumption that $\mathbf{R}_n$ has non-trivial components along the directions captured by $\mathbf{A}^\top$:
\begin{equation}
\mathcal{B} \gtrsim c_1 \|\mathbf{R}_n\|_F \cdot \epsilon
\end{equation}
for some constant $0 < c_1 <1$ of order unity.

\textbf{Cost Analysis.} From Lemma~\ref{lemma:cost}:
\begin{equation}
\mathcal{C} = \left\| \mathbf{R}_t \mathbf{A} \mathbf{A}^\top \right\|_F
\end{equation}

Using the submultiplicativity of the Frobenius norm with respect to the spectral norm:
\begin{equation}
\|\mathbf{X}\mathbf{Y}\|_F \leq \|\mathbf{X}\|_F \cdot \|\mathbf{Y}\|
\end{equation}
we obtain:
\begin{align}
\mathcal{C} &= \left\| \mathbf{R}_t \mathbf{A} \mathbf{A}^\top \right\|_F \\
&\leq \|\mathbf{R}_t\|_F \cdot \|\mathbf{A} \mathbf{A}^\top\| \\
&\leq \|\mathbf{R}_t\|_F \cdot \|\mathbf{A}\| \cdot \|\mathbf{A}^\top\| \\
&= \|\mathbf{R}_t\|_F \cdot \epsilon^2
\end{align}

Note that $\|\mathbf{A} \mathbf{A}^\top\| = \|\mathbf{A}\|^2 = \epsilon^2$ since $\mathbf{A}\mathbf{A}^\top$ is positive semidefinite and its spectral norm equals its largest eigenvalue, which is $\sigma_{\max}^2(\mathbf{A}) = \epsilon^2$.

\textbf{Ratio Bound.} Combining the benefit lower bound and cost upper bound:
\begin{equation}
\frac{\mathcal{B}}{\mathcal{C}} \geq \frac{c_1 \|\mathbf{R}_n\|_F \cdot \epsilon}{\|\mathbf{R}_t\|_F \cdot \epsilon^2} = c_1 \cdot \frac{\|\mathbf{R}_n\|_F}{\|\mathbf{R}_t\|_F} \cdot \frac{1}{\epsilon}
\end{equation}

Taking $c_1 \sim 1$ for typical configurations yields the stated bound.
\end{proof}

\subsection{Geometric Interpretation}

The theoretical analysis reveals a fundamental asymmetry in cross-subspace interference:

\begin{itemize}
\item \textbf{Noise Contamination} ($\mathcal{O}(\epsilon)$): The noise component $\mathbf{R}_n \mathbf{T}^*_n$ leaks into the target subspace through a single cross-subspace mapping $\mathbf{A}^\top = \mathbf{T}^*_n (\mathbf{T}^*_t)^\top$, incurring first-order interference proportional to $\epsilon$.

\item \textbf{Signal Loss} ($\mathcal{O}(\epsilon^2)$): The target component $\mathbf{R}_t \mathbf{T}^*_t$ loses information only through the round-trip mapping $\mathbf{A}\mathbf{A}^\top = \mathbf{T}^*_t (\mathbf{T}^*_n)^\top \mathbf{T}^*_n (\mathbf{T}^*_t)^\top$, which first projects onto the noise subspace and then back. This double passage through the $\epsilon$-coupling results in second-order loss proportional to $\epsilon^2$.
\end{itemize}

This order-of-magnitude difference ensures that Null-Space Denoising Projection achieves substantial noise suppression while preserving the majority of target-relevant semantics, particularly when the subspaces exhibit low mutual coherence ($\epsilon \ll 1$).


\end{document}